\newtheorem{theorem}{Theorem}[section]
\newtheorem{lemma}[theorem]{Lemma}
\title{Do Concept Bottleneck Models Respect Localities?}
\author{\name Naveen Raman \email naveenr@cmu.edu \\
      \addr Carnegie Mellon University
      \AND
      \name Mateo Espinosa Zarlenga \email me466@cma.ac.uk \\
      \addr University of Cambridge
      \AND
      \name Juyeon Heo \email jh2324@cam.ac.uk \\
      \addr University of Cambridge \AND
      \name Mateja Jamnik \email mateja.jamnik@cl.cam.ac.uk \\
      \addr University of Cambridge \\}
\begin{document}

\maketitle

\newcommand{\nrcomment}[1]{{\color{red} Naveen: {#1}}}

\begin{abstract}
Concept-based explainability methods use human-understandable intermediaries to produce explanations for machine learning models. 
These methods assume concept predictions can help understand a model's internal reasoning. 
In this work, we assess the degree to which such an assumption is true by analyzing whether concept predictors leverage ``relevant'' features to make predictions, a term we call \textit{locality}. 
Concept-based models that fail to respect localities also fail to be explainable because concept predictions are based on spurious features, making the interpretation of the concept predictions vacuous. 
To assess whether concept-based models respect localities, we construct and use three metrics to characterize when models respect localities, complementing our analysis with theoretical results. 
Each of our metrics captures a different notion of perturbation and assess whether perturbing ``irrelevant'' features impacts the predictions made by a concept predictors.
We find that many concept-based models used in practice fail to respect localities because concept predictors cannot always clearly distinguish distinct concepts.
Based on these findings, we propose suggestions for alleviating this issue.
\end{abstract}

\section{Introduction}
\label{sec:intrdocution}
Concept-based explainability provides insights into black-box machine learning models by constructing explanations via human-understandable ``\textit{concepts}''~\citep{tcav,ace,cbms}.
For example, we can explain that an image was predicted to be a cherry because we predicted that the concepts ``Red Fruit'' and ``Circular Fruit`` were present in the image. 
Within this paradigm, Concept Bottleneck Models (CBMs)~\citep{cbms} are a family of models that first predict concepts from an input (via a \textit{concept predictor}) and then predict a downstream label from these concepts (via a \textit{label predictor}). 
This design allows CBMs to provide concept-based explanations via their predicted concepts~\citep{chauhan2022interactive, shin2023closer, intcem}. 

Models within the concept-based paradigm rely on the assumption that a model's concept predictions serve as an explanation for its predictions~\citep{cbms}. 
% Explanations for concept-based models are made based on predicted concepts. 
For example, if the concept predictor for ``Red Fruit'' mistakenly learns to predict the ``Circular Fruit'' concept instead, then explanations based on the ``Red Fruit'' concept predictor would be incorrect as this predictor fails to reflect the presence of ``Red Fruit.''
Therefore, the faithfulness of concept predictors to their underlying concepts is critical because these models operate under the strong assumption that concept predictions align with their corresponding semantic concepts.

Our work analyzes the faithfulness of concept predictors by assessing the degree to which a model's concept predictions are a proxy for a model's internal reasoning.  
We do so to understand whether explanations from concept-based models are \textit{trustworthy}, that is whether we can rely on the explanations from concept-based models to gain insights~\citep{trustworthy_ml_book}. 
We analyze the relationship between concept predictors and a model's internal reasoning through \textit{locality}, which refers to the idea that oftentimes only a subset of features are needed to predict a concept (Figure~\ref{fig:main_figure}).  
For example, we assess the degree to which the prediction of the concept ``Red Fruit'' is influenced by background features irrelevant to that concept. 
We construct a set of experiments to assess whether concept-based models respect locality, as this can give us insight into their trustworthiness. 
Our results reveal that concept-based models often fail to respect localities, a significant shortcoming that casts doubt on their interpretability. 

Our contributions are as follows: we (1) construct three metrics to quantify whether models respect localities found in datasets, (2) conduct thorough experiments across a variety of architectural and training choices to characterize when models respect localities, and (3) propose theoretical models to better understand the impact of dataset construction on locality.  
\footnote{We include our code and dataset construction details here: \url{https://github.com/naveenr414/Spurious-Concepts}}

\begin{figure*}
    \centering
    \includegraphics[width=\textwidth]{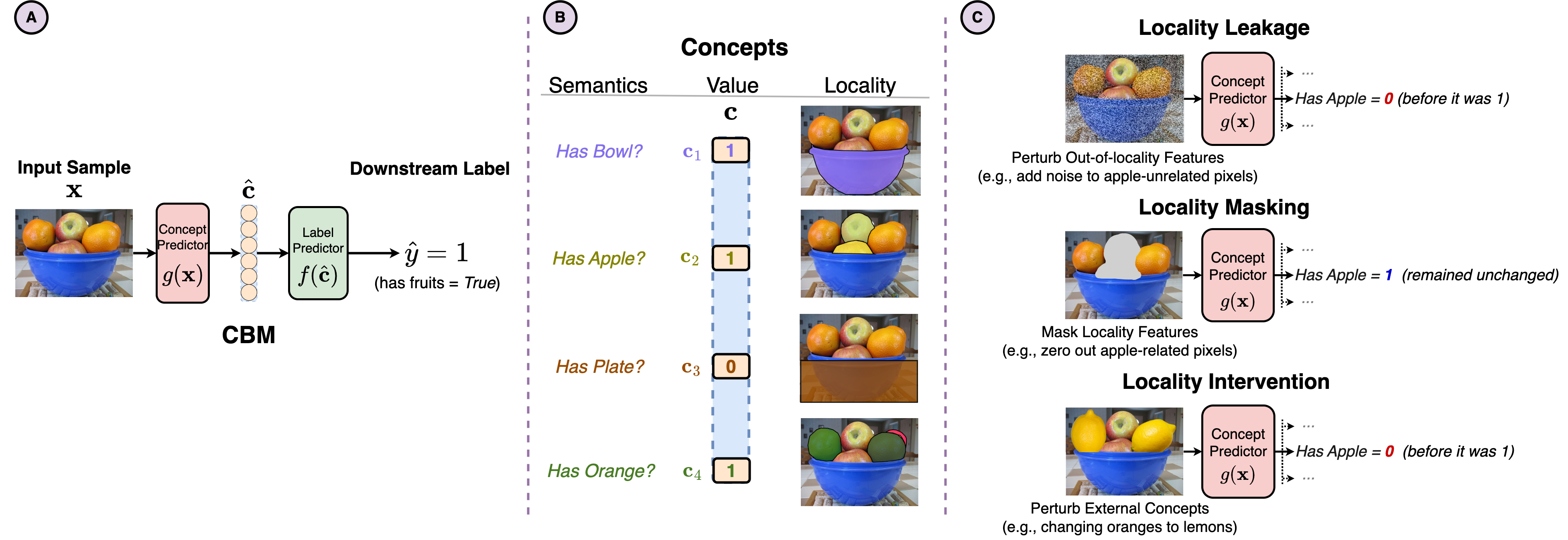}
    \caption{We investigate whether concept-based models properly capture known concept “localities”. For example, we train a CBM to predict whether an image has a fruit (left panel A), with concepts such as “Has Bowl” and “Has Apple”. Here, concepts are localised to image subregions (middle panel B). We study whether CBMs learn these localities by analysing how changes to input features, both within and outside a concept’s locality, impact a CBM’s predictions (right panel C).}
    \label{fig:main_figure}
\end{figure*}

\section{Related Works}
\label{sec:related}
\paragraph{Explainability and Locality}
Our work is broadly situated within the eXplainable Artificial Intelligence (XAI) field, which aims to elucidate the reasons behind a black-box model's predictions. 
Developing XAI methods is difficult because of the variety of desiderata for XAI methods~\citep{rudin2022interpretable}, including trustworthiness~\citep{shen2022trust} and accuracy~\citep{lipton2018mythos}. 
While a variety of XAI methods are used in practice, such as saliency maps~\citep{saliency} and SHAP~\citep{lundberg2017unified}, prior work has demonstrated the fragility of those methods by analyzing the patterns these methods learn~\citep{adebayo2018sanity}. 
Our work follows a similar perspective by studying the fragility of concept-based explanations through the lens of locality. 

\paragraph{Concept-based Explainability}
Our investigation into locality touches upon various other phenomena in concept-based models. 
Concept-based models use human-understandable concepts to explain a machine learning model's prediction~\citep{tcav}. 
Methods vary from fully concept-supervised~\citep{cbms, chen2020concept, post_hoc_cbms, prob_cbms}, where one has access to task and concept labels, to concept-unsupervised~\citep{ace, concept_completeness, label_free_cbms, peeb}, where only task labels are available. 
In this paper, we explore CBMs \citep{cbms} due to their ubiquity as building blocks for state-of-the-art methods~\citep{cem, post_hoc_cbms, prob_cbms,label_free_cbms, tabcbm}. 

Prior work in concept-based explainability has shown that concept-based models are prone to ``\textit{concept leakage}'', where label-specific information is undesirably encoded into concept predictions~\citep{mahinpei2021promises, havasi2022addressing}. 
Follow-up works studied how leakage may be (1)~detected at inference~\citep{glancenet}, (2)~measured at test-time~\citep{concept_metrics, marconato2024not}, (3)~affected by inter-concept correlations~\citep{concept_corr_datasets}, and (4)~exploited for adversarial attacks~\citep{concept_robustness}.
Our work builds on these by studying the related but separate phenomena of locality, which captures the robustness of concept predictors to perturbations. 
We build on similar work on spurious concept correlations~\citep{margeloiu2021concept,furby2023towards,furby2024can} by introducing metrics to rigorously understand the origin of this phenomenon in different architectural choices.  
Our work additionally builds on literature studying inter-concept relationships in concept-based models by analyzing how incorrectly learning such relationships can decrease the robustness of concept predictors~\citep{raman2024understanding}. 
Finally, related to our work is prior work which develops metrics for understanding concepts~\citep{concept_metrics, marconato2024not}; however, in contrast to these works, we focus on quantifying the ability of concept predictors to rely on the relevant sets of features, rather than capturing whether they encode impurities or leakages. 
We include a further comparison of our work with other metrics in Appendix~\ref{sec:metric_comparison}.

\paragraph{Spurious Correlations and Locality}
Our work investigates whether concept predictors rely only on relevant features while ignoring irrelevant features. 
Similar to our emphasis on whether concept-based models rely on irrelevant features, prior work has investigated whether deep learning models exploit spurious correlations~\citep{geirhos2020shortcut, spurrious_survey}. 
Previous work in spurious correlations investigated how factors such as model size~\citep{sagawa2020investigation}, loss function~\citep{group_dro}, and data augmentation~\citep{data_augmentation_spurious} impact spurious correlations. 
In this work, we similarly aim to systematically understand which factors impact localities in concept-based models. 
However, we differentiate ourselves from prior studies of spurious correlations because we (a) demonstrate that locality may fail to hold even in simple situations, and (b) study how a failure of locality can lead to the breakdown of interpretability for popular concept-based methods. 
\section{Why do we need locality?}
\label{sec:locality}
We first formally define the problem of concept-based learning, then explain why we can use locality to understand the relationship between model predictions and model reasoning. 

\subsection{Defining Concept-Based Learning}
\label{sec:definition}
Concept-based learning is a supervised learning task where, given a set of input features, models produce a label prediction and a corresponding concept-based explanation. 
In concept-based learning we are given a set of features $X = \{\mathbf{x}^{(i)} \in \mathbb{R}^{m}\}_{i=1}^n$, task labels $Y = \{y^{(i)} \in \{1, \cdots, L\} \}_{i=1}^n$, and concepts $C = \{\mathbf{c}^{(i)} \in \{0,1\}^{k}\}_{i=1}^n$. 
Concepts are human-understandable attributes distilled from the raw input features $\mathbf{x}$.
For example, if $\mathbf{x}$ is an image, then a concept could correspond to the attribute ``Red Fruit''.
Each $\mathbf{c}^{(i)}$ is a 0-1 vector of length $k$, so that $c^{(i)}_{j} = 1$ indicates the presence of the $j$-{th} concept in $\mathbf{x}^{(i)}$.

We can use concept-based learning to construct interpretable architectures. 
A popular example of concept-based architectures is concept bottleneck models (CBMs)~\citep{cbms}. 
CBMs are a two-stage architecture consisting of a concept predictor $g: \mathbf{x} \mapsto \hat{\mathbf{c}}$ and a subsequent label predictor $f: \hat{\mathbf{c}} \mapsto \hat{y}$. 
The concept predictor maps an input $\mathbf{x}$ to a set of interpretable concepts $\hat{\mathbf{c}}$, and then the label predictor uses these interpretable concepts to predict a label $\hat{y}$. 
During inference, for any input $\mathbf{x}$, a CBM predicts both a set of concepts $\hat{\mathbf{c}} = g(\mathbf{x})$ and a label $\hat{y} = f(g(\mathbf{x}))$. 

The explainability of concept-based models arises from the concept predictions $\hat{\mathbf{c}}$. 
Typically, the label predictor $f$ is a shallow neural network (e.g., a linear layer), so mispredicted labels can be traced back to concept prediction issues. 
For example, if an image is incorrectly predicted to be an ``apple'', then the incorrect prediction can be traced back to an incorrect prediction for the ``Red Fruit'' concept. 

\subsection{Locality and Concept-Based Explainability}
\label{sec:locality_concept}
To understand the assumptions underpinning concept-based models, we first introduce the idea of locality. 
We define \textit{locality} as a dataset property where concepts can be fully predicted using some subset of features. 
For any concept and data point combination, we define its \textit{local region} as a minimal set of features needed to fully predict the concept. 
Formally, for each concept $j$ and data point $\mathbf{x}^{(i)}$, we define its local region as $\mathcal{L}_{j}(\mathbf{x}^{(i)}) \subseteq \{1 \cdots m\}$, where $m$ is the number of features in $\mathbf{x}^{(i)}$. 
This corresponds to a minimum set of features that can completely predict the corresponding concept value.
For example, the ``Red Fruit'' concept can be fully predicted using all the pixels from the foreground and none from the background. 
In Figure~\ref{fig:main_figure}b, we highlight examples of local regions for different concepts. 

In this work, we claim that concept-based models should make concept predictions based only on the features in a concept's local region. 
We believe that concept-based models can assist with interpretability, but only when the underlying concept predictors respect localities.
When concept-based models fail to respect localities, they rely on features outside a concept's local region to make predictions. 
Because features outside of a concept's local region have no impact on the true presence of a concept, concept-based models may rely on spurious correlations to predict concepts. 
Concept prediction based on spurious correlations jeopardizes the use of concepts as an explainability mechanism because the presence of a concept can no longer reliably provide insight into a model's reasoning. 
For example, if the ``Red Fruit'' concept is predicted using irrelevant elements such as the presence of a ``Fruit Bowl'' rather than the presence of red fruit, then we cannot explain predictions based on the ``Red Fruit'' concept. 
A better understanding of localities allows for a better understanding of the trustworthiness of concept-based models. 

We answer some common misconceptions about locality: 
\begin{enumerate}
    \item \textbf{Misconception \#1: ``Respecting localities and avoiding spurious correlations are the same problem''} - Spurious correlations and respecting localities are concerned with learning salient patterns. However, the motivations behind each are different; respecting localities is concerned with the explainability of concept-based models, while spurious correlations are concerned with the generalizability of models. Moreover, as we demonstrate in Section~\ref{sec:setup}, even in datasets without shortcuts, and therefore, no motivations to learn spurious correlations, concept-based models can still fail to respect localities. 
    \item \textbf{Misconception \#2: ``Global concepts negate locality''} - If a concept is global and not confined to a local subset of features, then the local region is all the features; $\mathcal{L}_{j}(\mathbf{x}^{(i)}) = [m]$. 
    In this scenario, all concept predictors respect localities because no features exist outside a concept's local region. In this work, we do not focus on such global concepts because they do not yield much insight into the nature of locality. 
    \item \textbf{Misconception \#3: ``Not respecting localities is a benefit, not a drawback of concept-based models''} - While leveraging correlations between features outside of the local region and the presence of a concept could lead to spurious yet accurate concept predictors, they endanger the separation between concepts and labels in the concept-based model.
    Combining label and concept predictions should reflect the presence of a concept rather than correlations. 
    Otherwise, generating trustworthy explanations from concept predictions becomes infeasible and concept predictors become inherently less robust to changes to out-of-locality features.
    \item \textbf{Misconception \#4: ``Concept leakage and locality refer to the same phenomenon''} - Concept leakage refers to the phenomenon where information about downstream labels or overall data distribution are ``leaked'' into concept predictions, which can reduce the purity of concept predictions~\citep{mahinpei2021promises}. 
    While locality is also concerned with concept predictors, its focus is on the robustness of concept predictors to feature perturbations outside of a concept's local region. 
    Locality assesses whether concept predictions truly reflect a model's reasoning about a concept, while leakage assess whether downstream labels leak into concept predictions. 
\end{enumerate}
\section{Quantifying Locality}
\label{sec:quantifying}
We assess the reliability of concept-based models by analyzing whether these models respect localities in a dataset.  
As noted in Section~\ref{sec:locality}, concept-based models failing to respect localities jeopardizes the assumption that concept predictions are indicators of a model's reasoning behind a prediction. 
For example, if the prediction for the concept ``Red Fruit'' is not based on predicting red fruit but rather some correlated concept, then it becomes difficult to explain model predictions based on concept predictions. 
We study the extent to which models respect localities to understand whether we can trust the explanations and concept predictions from concept-based models. 

We construct three metrics to quantify locality because each metric captures a notion of perturbation that concept-based models must be invariant to.
When concept-based models respect localities, they are invariant to perturbations outside of a concept's local region. 
Each of our metrics for locality corresponds to a different notion of perturbation; locality leakage corresponds to adversarial perturbations, locality intervention corresponds to in-distribution perturbations, and locality masking corresponds to out-of-distribution masking perturbations. 
We present a pictorial example of each metric in Figure~\ref{fig:main_figure}.

\subsection{Locality Leakage}
\label{sec:locality_leakage}
The locality leakage metric assesses a concept-based model's invariance to adversarial distortion outside of a concept's local region. 
Prior work has demonstrated that adversarial perturbations can impact a model's predictions~\citep{goodfellow2014explaining}. 
However, it is unknown whether adversarial perturbations outside of a concept's local region impact their predictions, especially in scenarios without spurious correlations between features. 

To formalize the impact of adversarial perturbations on a concept's predictions, we define the \textit{locality leakage} metric as follows: 
\begin{align*}
\tag{Locality Leakage}
    % \scriptstyle
    h_{l} \Bigg( g,k,\{\mathbf{x}^{(i)}\}_{i=1}^{n},\{\mathcal{L}_{j}\}_{j=1}^{k} \Bigg) =  \frac{1}{k} \sum_{j=1}^{k} \max_{\substack{\mathbf{x}^\prime \in \mathbb{R}^m \text{ s.t. } \\ \forall a \in \mathcal{L}_j(\mathbf{x}^{(i)}), \; \mathbf{x}^\prime_a = \mathbf{x}^{(i)}_a}} |g(\mathbf{x}^\prime)_{j} - g(\mathbf{x}^{(i)})_{j}| \end{align*}
Here, $\mathbf{x}'$ is an adversarial data point, which perturbs features outside of the local region for concept $j$, $\mathcal{L}_{j}(\mathbf{x}^{(i)})$. 
For this data point, we measure the degree to which we can alter the prediction of concept $j$ (i.e., $|g(\mathbf{x}'_{j}) - g(\mathbf{x}^{(i)})_{j}|$) simply through adversarial perturbation of features outside of a concept's local region (i.e., $\forall a \in \mathcal{L}_j(\mathbf{x}^{(i)}), \; \mathbf{x}^\prime_a = \mathbf{x}^{(i)}_a$). 
Taken together, the locality leakage metric averages across all concepts $j$, the maximum perturbation in concept prediction, $|g(\mathbf{x}'_{j}) - g(\mathbf{x}^{(i)})_{j}|$, by selecting an alternate data point $\mathbf{x}'$ that agrees on the local region $\mathcal{L}_{j}(\mathbf{x}^{(i)})$. 
That is, $\mathbf{x}'$ leverages features outside of its local region to modify the prediction of concept $j$.
A large locality leakage implies that modifications to irrelevant features significantly impact a concept's prediction.
In Appendix~\ref{sec:regularization}, we consider an alternative formulation of the locality leakage metric which constrains the magnitude of adversarial perturbations and demonstrates similar results up to certain levels of constraints. 

\subsection{Locality Intervention}
\label{sec:locality_intervention}
We introduce the locality intervention metric to assess a concept-based model's invariance to in-distribution shifts.
The locality leakage metric potentially suffers from perturbations leading to significantly out-of-distribution features never seen in practice. 
To address this, we perturb features outside the local region while ensuring that the resulting feature vector $\mathbf{x}'$ is still in distribution and the underlying ground-truth concept is unchanged. 
To perturb a data point $\mathbf{x}^{(i)}$, we find an alternative data point $\mathbf{x}' = \mathbf{x}^{(l)} \in X$, so that concept $j$ is unaltered, $c^{(i)}_{j} = c^{(l)}_{j}$, while the prediction for concept $j$ is maximally altered, $|g(\mathbf{x}^{(l)}_{j}) - g(\mathbf{x}^{(i)})_{j}|$. 
Because $g(\mathbf{x}^{(l)})_{j} \in [0,1]$ rather than being $\{0,1\}$, the locality intervention metric allows us to quantify the magnitude of impact that out-of-local region perturbations in a more fine-grained manner compared to concept prediction accuracy.  
Formally, we define the \textit{locality intervention} metric as follows: 
\begin{align*}
    % \scriptstyle
    \tag{Locality Intervention}
    h_{i} \Bigg( g,k,\{\mathbf{x}^{(i)}\}_{i=1}^{n},\{\mathbf{c}^{(i)}\}_{i=1}^{n} \Bigg) = \frac{1}{k} \sum_{j=1}^{k} \max_{\substack{\mathbf{x}^{(l)} \in X \text{ s.t. } \\ c^{(l)}_{j}=c^{(i)}_{j}}} |g(\mathbf{x}^{(l)})_{j} - g(\mathbf{x}^{(i)})_{j}|
\end{align*}
Here, we take the average perturbation $|g(\mathbf{x}^{(l)})_{j} - g(\mathbf{x}^{(i)})_{j}|$ across concepts $j$, by seeing how much an alternate data point $\mathbf{x}^{(l)}$ with the same ground-truth concept value disagrees in concept prediction. 
In essence, this measures a concept predictors susceptibility to in-distribution perturbations across other concepts outside of concept $j$'s local region.

\subsection{Locality Masking}
\label{sec:locality_masking}
We introduce a third metric that assesses a concept-based model's invariance to a masking perturbation outside a concept's local region. 
The locality masking metric resides between the locality leakage and intervention metrics in terms of perturbation severity. 
The locality leakage metric adversarially perturbs non-local region features, which can lead to out-of-distribution data points, while the locality intervention metric ensures that perturbations are in distribution, but requires a large dataset $C$ to find in-distribution perturbations. 
To get around the limitations of locality leakage and intervention, we assess the impact masking concept $j^\prime$ upon the prediction for concept $j$.
By relying on masking to perturb features, our resulting features are not significantly out-of-distribution, especially because we only mask a fraction of the total features. 
All we require is access to the localities for each concept, through, for example, bounding boxers for concepts. 

We assess two components for the locality masking metric: the \textit{locality relevant masking metric} and the \textit{locality irrelevant masking metric}. 
For a concept $j$, the locality relevant masking metric computes the impact of masking relevant features $\mathcal{L}_{j}(\mathbf{x}^{(i)})$, while the locality irrelevant masking metric computes the impact of masking features relevant to an unrelated concept $j^\prime$, $\mathcal{L}_{j^\prime}(\mathbf{x}^{(i)})$. 
Let $M(\mathbf{x}^{(i)},j)$ be the replacement of all features in $\mathcal{L}_{j}(\mathbf{x}^{(i)})$ with a constant mask value $\eta$. 
Additionally, let $S(\mathbf{x}^{(i)},j) = \{j^\prime | \mathcal{L}_{j}(\mathbf{x}^{(i)}) \cap \mathcal{L}_{j^\prime}(\mathbf{x}^{(i)}) = \emptyset\}$, which corresponds to the set of concepts such that the local region of $j^\prime$ is non-overlapping with the local region for $j$. 
This way, masking concept $j^\prime$ should not impact the prediction for concept $j$. 
Then, we define the locality relevant masking metric as: 
\begin{align*}
    % \scriptstyle
    \tag{Relevant Masking}
     h_{rm} \Bigg( g,k,\{\mathbf{x}^{(i)}\}_{i=1}^{n},M\Bigg) = \frac{1}{k} \sum_{j=1}^{k} \Bigg(|g(M(\mathbf{x}^{(i)},j))_{j} - g(\mathbf{x}^{(i)})_{j}| \Bigg)
\end{align*}
Similarly, we define the locality irrelevant masking metric as: 
\begin{align*}
    % \scriptstyle
    \tag{Irrelevant Masking}
    h_{im} \Bigg( g,k,\{\mathbf{x}^{(i)}\}_{i=1}^{n},M\Bigg) = \frac{1}{k} \sum_{j=1}^{k}  \frac{1}{|S(\mathbf{x}^{(i)},j)|} \sum_{j^\prime \in S(\mathbf{x}^{(i)},j)} |g(M(\mathbf{x}^{(i)},j^\prime))_{j} - g(\mathbf{x}^{(i)})_{j}|
\end{align*}

Here, $M(\mathbf{x}^{(i)},j)$ defines a masking function that masks concept $j$ in data point $\mathbf{x}^{(i)}$. 
We analyze the impact of masking upon the concept prediction, both when masking concept $j$ (where we compute $|g(M(\mathbf{x}^{(i)},j))_{j} - g(\mathbf{x}^{(i)})_{j}|$), and when masking concept $j'$ (where we compute $|g(M(\mathbf{x}^{(i)},j^\prime))_{j} - g(\mathbf{x}^{(i)})_{j}|$). 
We note that we divide by $S(\mathbf{x}^{(i)},j)$ to account for the number of masked concepts $j'$.

Concept-based models that respect localities should be unimpacted by masking outside of the local region and only impacted by perturbations to features within their local region.
That is, such models should have a high locality relevant masking metric and a low locality irrelevant masking metric. 
\section{Experiments}
\label{sec:experiments}
We construct experiments to assess whether CBMs respect localities in both (a) controlled testbed where we can understand the impact of various factors and (b) real-world datasets where we can understand the performance of concept-based models in practice. 

\subsection{Experimental Setup}
\label{sec:setup}
\begin{figure*}
    \centering
    \includegraphics[width=0.7\textwidth]{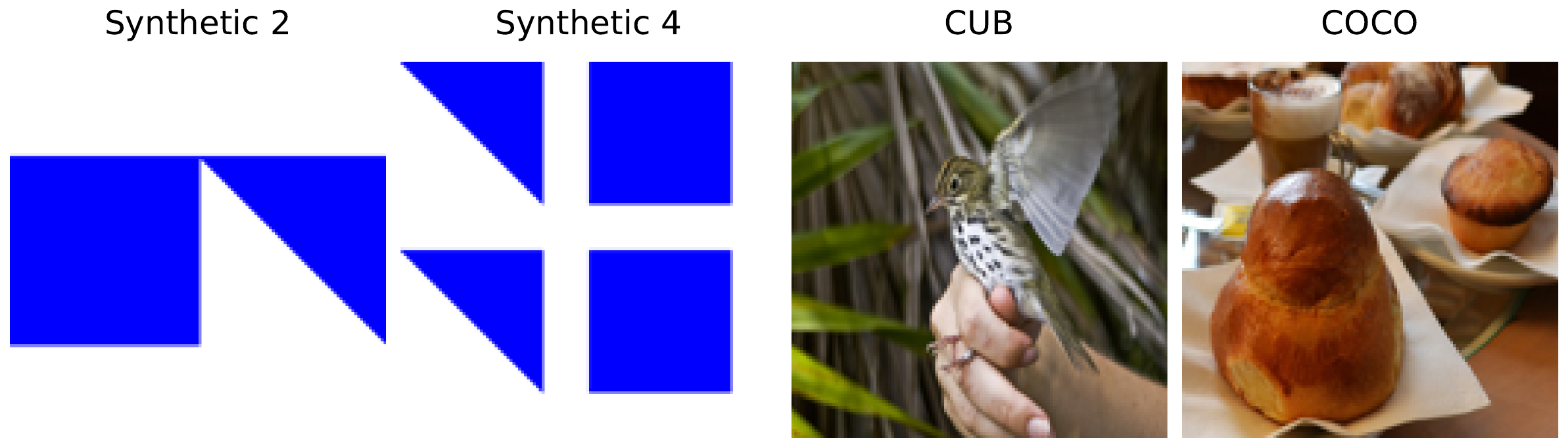}
    \caption{We plot examples from our 2-object and 4-object synthetic datasets, along with examples from CUB and COCO. For the synthetic datasets, concepts refer to the positions of each shape. For CUB, concepts are the colours and shapes of different parts of the bird, and for COCO, concepts are the presence of objects such as bowls and spoons.}
    \label{fig:dataset}
\end{figure*}

\paragraph{Synthetic Datasets}
We construct a synthetic dataset to understand locality in a controlled environment (see Figure~\ref{fig:dataset}). 
Our synthetic dataset consists of $k$ concepts and $\frac{k}{2}$ object locations. 
Each location consists of a triangle or a square, corresponding to two concepts: ``Is Square'' and ``Is Triangle'', for a total of $k$ concepts. 
For example, in the synthetic 2-object dataset, we have four concepts and two object locations. 
We construct datasets with 1, 2, 4, and 8 objects.
For the 1- and 2-object datasets, we construct 256 training examples, while for the 4- and 8-object datasets we construct 1024 training examples due to the increased number of concept combinations.
We construct our dataset using synthetic objects so concepts for different object locations are independent; the presence of a triangle on the left side gives no information on the shape of the right.
This construction implicitly \textit{should} discourage models from picking up spurious correlations due to the absence of shortcuts. 

\paragraph{Non-Synthetic Datasets}
To understand locality in more realistic scenarios, we evaluate concept-based models on two non-synthetic datasets: Caltech-UCSD Birds (CUB)~\citep{cub} and Common Objects in Context (COCO)~\citep{coco}. 
We select these datasets because both are ubiquitous within concept-based explainability~\citep{cem,coco_concept_based}.  
The CUB dataset consists of images annotated with a bird species (the label) and properties, such as wing colour and beak length. 
This results in 112 attributes, selected by \citet{cbms}, which can be partitioned into 28 concept groups based on the body part (e.g. wing). 
The COCO dataset consists of scenes with objects. 
The original COCO dataset consists of 100 objects, and we randomly downsample this to 10 objects so that COCO and CUB have comparable dataset sizes. 
Concepts refer to the presence of objects and the task label is a function constructed from this concept that indicates whether any object in some fixed subset is present. 
We construct a task in this manner following prior work~\citep{cem}. 

\paragraph{Other Experimental Details}
We vary our architecture and training procedure across experiments and detail these choices for each experiment individually. 
We run all experiments for three seeds on an NVIDIA TITAN Xp GPU, with the total number of hours ranging from 100 to 200. 
We run experiments on a Debian Linux platform with 4 CPUs and 1 GPU. 
We train our synthetic models for 50 epochs, our COCO model for 25 epochs, and our CUB model for 100 epochs, selecting each based on the time at which accuracy and loss reach convergence. 
For CUB and COCO we select a learning rate of $0.005$, while for the synthetic experiments, we select $0.05$, selecting this through manual inspection of model performance. 

\subsection{Impact of Architecture}
\label{sec:architecture}
\begin{figure*}
    \centering
     \subfloat[\centering Model Depth]{\includegraphics[width=0.3\textwidth]{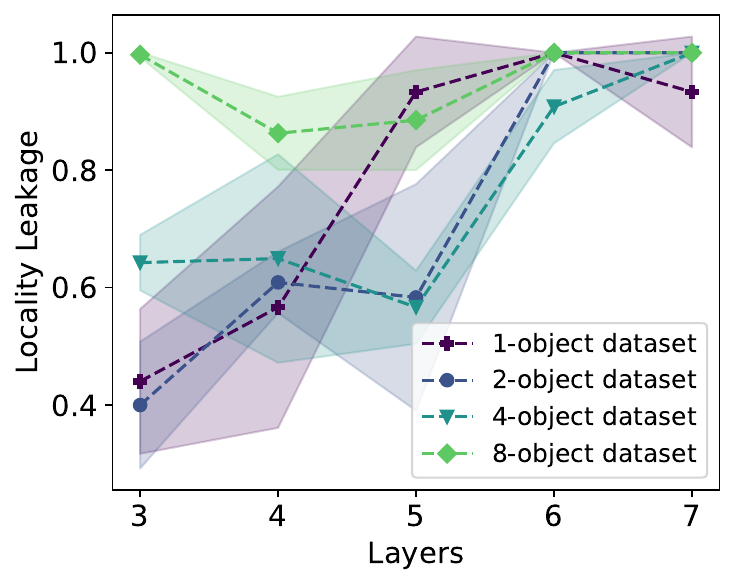}}
     \subfloat[\centering Fixed Parameters]{\includegraphics[width=0.3\textwidth]{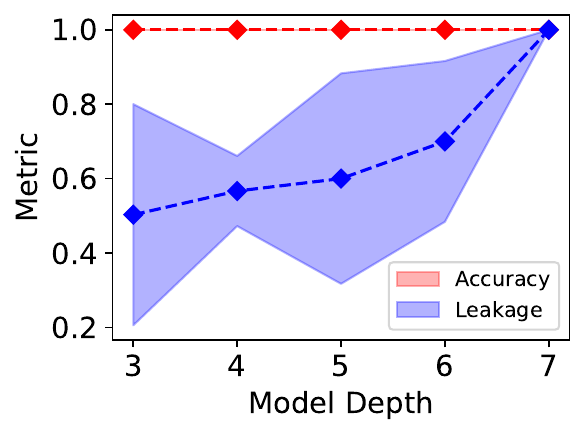}}
     \subfloat[\centering Fixed Receptive Field]{\includegraphics[width=0.3\textwidth]{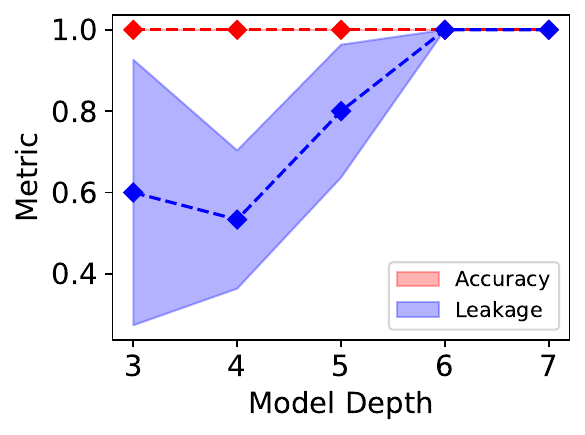}}
    \caption{We find that deep concept predictors lead to higher locality leakage, implying that deep concept predictors leverage features beyond a concept's local region. Such a trend occurs even when accounting for the number of parameters (b) or the receptive field size(c). }
    \label{fig:architecture}
\end{figure*}

We analyze how architecture choice impacts whether concept-based models respect locality. 
We perform our experiments across the different synthetic object datasets (see Section~\ref{sec:architecture}); we do so because we can quickly evaluate architectures without tuning parameters due to the simplicity of the dataset.
Additionally, even smaller models can learn the synthetic dataset, while the same is not true for CUB and COCO. 
Here, we first evaluate models based on our locality leakage metric. 
To instantiate locality leakage, we employ projected gradient descent to find the data point $\mathbf{x}'$ that maximizes $|g(\mathbf{x}')_{j} - g(\mathbf{x}^{(i)})_{j}|$ for each concept $j$. 
Projected gradient descent allows us to quickly find $\mathbf{x}'$, while ensuring that concepts within the local region $\mathcal{L}_{j}(\mathbf{x}^{(i)})$ remain constant.
In Section~\ref{sec:dataset}, we consider our locality intervention and masking metrics. 

Prior work in spurious correlations by~\citet{sagawa2020investigation} demonstrates that larger models learn more spurious correlations, so we naturally investigate whether such a pattern holds for localities. 
We construct a set of concept predictors which vary in depth from 3 to 7 layers, where deeper models have more parameters. 
We train all models for 50 epochs; at 50 epochs, all models achieve perfect task and concept accuracy.

We compare the locality leakage for our models across varying model depths and find in Figure~\ref{fig:architecture} that deeper models tend to have higher locality leakages. 
Across all models and dataset combinations, the locality leakage is at least $0.4$, implying that on average, concept predictions can be changed by at least $40\%$ through adversarial perturbations. 
We also find that increasing model depth leads to higher locality leakages, with 7-layer models having locality leakages over 0.9 for all synthetic datasets. 
This mimics the trend found in the spurious correlation literature where deeper models tend to produce more spurious correlations~\citep{sagawa2020investigation}. 
Comparing across our synthetic datasets, we find that the 8-object dataset produces models with the highest leakage for all models except for 5-layer models. 
Our experiments reveal that deeper models, which also correspond to models with more parameters, tend to respect locality less. 

While deeper models lead to higher levels of locality leakage, it is unclear whether this is due to (a) the concept predictors being deeper, (b) the concept predictors having more parameters, or (c) the concept predictors having a larger receptive field. 
To distinguish between these scenarios, we construct concept predictors that hold either (a) the number of parameters or (b) the receptive field size constant. 
We again vary the model depth between 3 and 7 layers for both settings and compare both the concept accuracy and the locality leakage metric for these models. 

We find that depth, rather than the number of parameters or receptive field size after all convolutions, is primarily responsible for the increase in locality leakage. 
In Figure~\ref{fig:architecture}b and Figure~\ref{fig:architecture}c, we find that even when holding the number of parameters or receptive field size constant, depth-7 models still exhibit a locality leakage of $1.0$. 
Between depth $4$ and depth $7$ we find that the locality leakage is non-decreasing, so increasing depth either increases locality leakage or keeps it at $1.0$. 
Our results imply that overly deep models could be partially responsible for concept-based models failing to respect localities. 

\subsection{Dataset Choice}
\label{sec:dataset}
\begin{figure*}[h]
    \centering
    %  \subfloat[\centering Locality Leakage]{\includegraphics[height=0.25\textwidth]{figures/avg_adversarial.pdf}}
    % \hspace{0.5cm}
     \subfloat[\centering Synthetic]{\includegraphics[height=0.3\textwidth]{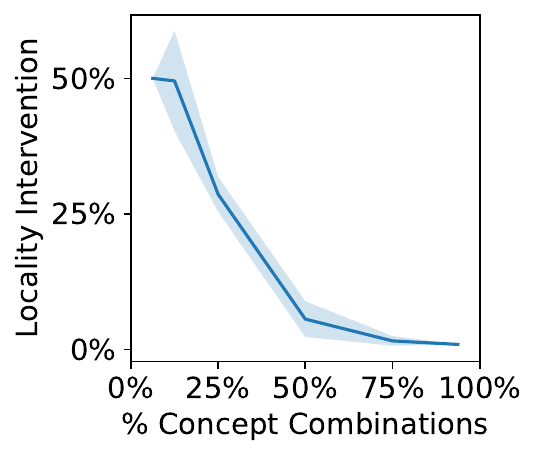}} 
    \hspace{0.5cm}
     \subfloat[\centering CUB and COCO]{\includegraphics[height=0.3\textwidth]{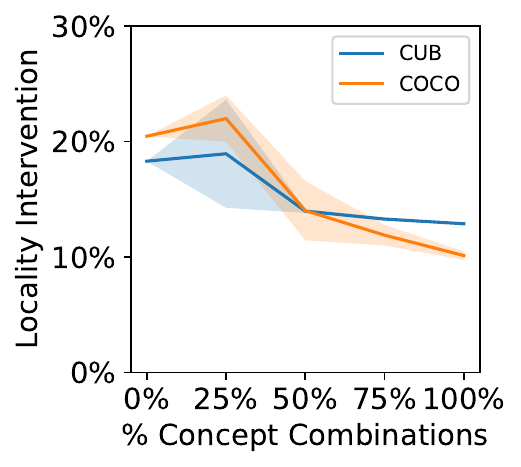}} 
    \caption{We find that datasets with a larger number of concept combinations can reduce the locality intervention metric in synthetic (a) and real-world scenarios (b). Therefore, training with a diverse set of concept combinations can help improve the in-distribution robustness of concept predictors. } 
    \label{fig:intervention}
\end{figure*}
\begin{figure*}[h]
    \centering
 \subfloat[\centering Locality Masking]{\includegraphics[height=0.25\textwidth]{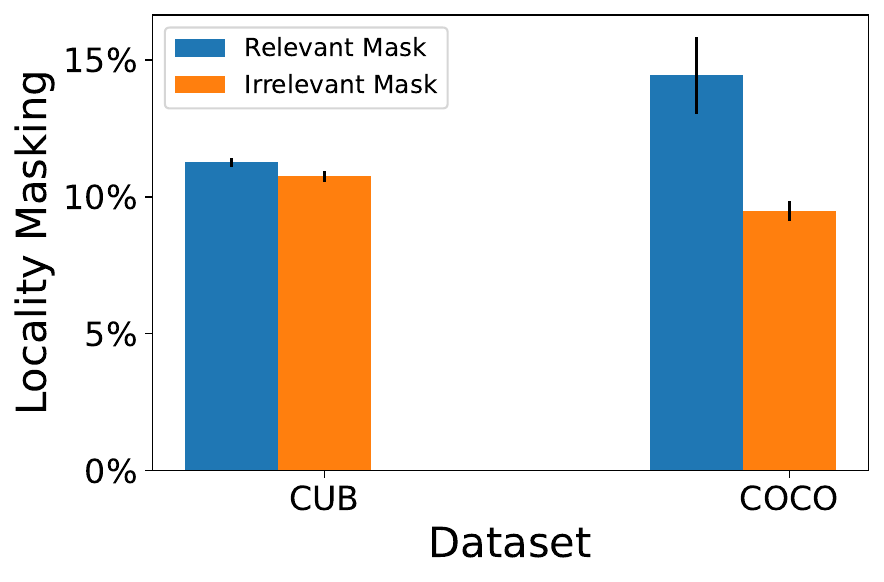}}
        \hspace{1cm}
     \subfloat[\centering Masking Example]{\includegraphics[height=0.25\textwidth]{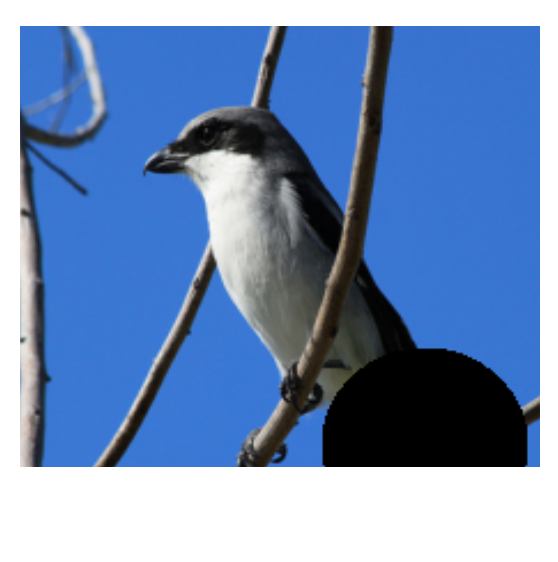}}

    \caption{To assess robustness to out-of-distribution perturbations, we compare the relevant and irrelevant masks for the CUB and COCO datasets. For CUB, masking relevant or irrelevant concepts similarly impacts predictions, indicating that concept predictors leverage features beyond a concept's local region to make predictions. For example, masking the tail of the bird (b) does not lead to changes in prediction for the concept of ``Tail Colour.''}
    \label{fig:masking}
\end{figure*}

\paragraph{Locality Intervention} To understand how concept-based models perform beyond synthetic datasets, we assess robustness to in-distribution (through locality intervention) and out-of-distribution (through locality masking) perturbations in real-world datasets.
For in-distribution perturbations, we conjecture that training diversity plays a large role because more diverse training datasets allow concept predictors to better differentiate between concepts. 
To quantify this, we analyze the impact of train-time concept combinations on test-time locality intervention. 
We compare this trend for three datasets: the 8-object synthetic dataset, CUB, and COCO. 
For each dataset we vary the fraction of concept combinations from $25\%$ to $100\%$. 
For example, $25$\% corresponds to sampling 25\% of the values for $\mathbf{c}^{(i)}$ seen in the training dataset and filtering the training dataset to only use tuples $(\mathbf{x}^{(i)},\mathbf{c}^{(i)},y^{(i)})$ with corresponding $\mathbf{c}^{(i)}$. 
$100$\% corresponds to using the original training dataset. 
We ensure that perturbations do not impact concept labels by creating a set of data points for each concept combination $\mathbf{c}$. 
By doing so, we can always find a data point where concept $j$ is held constant, while other concepts $j'$ are varied.

We analyze the relationship between the diversity of the training concept and the locality intervention in Figure~\ref{fig:intervention} and find that more combinations of concepts decrease the locality intervention. 
In the synthetic dataset, observing 50\% of concept combinations reduces the locality intervention to $0.06$. 
We see similar inflexion points for CUB and COCO as using 50\% of concept combinations reduces locality intervention to $0.14$. 
For all three data sets, increasing the diversity of the training set reduces locality intervention, demonstrating the importance of constructing data sets with a large number of combinations of concepts. 
Our results reflect that dataset-related changes can impact in-distribution perturbations, while architectural changes can impact out-of-distribution perturbations. 

\paragraph{Locality Masking} Next, we investigate the performance of concept-based models with real-world datasets through the locality masking metric. 
Locality masking captures more realistic out-of-distribution perturbations than locality leakage because the perturbations are not as dramatic, making it more appropriate for use with real-world datasets. 
For CUB, we have access to the centroid of each concept, so we mask all concepts within a radius of $\epsilon=0.3l$, where $l$ is the image width (we evaluate the impact of varying $\epsilon$ in Appendix~\ref{sec:masking_exploration}). 
To avoid masking unrelated features, we mask only features closer to the centre of the masked concept than the centre of any other concept.
For COCO we are provided the bounding box, so we use that instead. 
We select such an $\epsilon$ for CUB because it masks each concept without covering adjacent concepts; experiments with other values of $\epsilon$ reveal similar trends. 
We fill masked values with zero (black) but confirm our results using mean-masks in Appendix~\ref{sec:colour}. 

In Figure~\ref{fig:masking}, we assess locality masking for CUB and COCO. 
For CUB, we find only a 4.7\% difference between relevant and irrelevant masks, indicating that concept predictors leverage features beyond a concept's local region to make predictions. 
In both COCO and CUB, the locality masking metric is at most 15\%; intuitively, even when fully masking the concept, concept predictors can make predictions with only a 15\% loss in accuracy. 
For example, masking the bird's tail in Figure~\ref{fig:masking}b does not hinder tail colour prediction. 
This occurs because concept-based models infer a tail's colour from its body or head. 
Such behaviour is undesirable because concept predictors become detached from the concepts themselves, so predictions for the tail of a bird reflect properties of the body and head. 

\subsection{Training Modifications}
\label{sec:training_modification}
\begin{figure*}
    \centering
     \subfloat[\centering Epochs]{\includegraphics[width=0.3\textwidth]{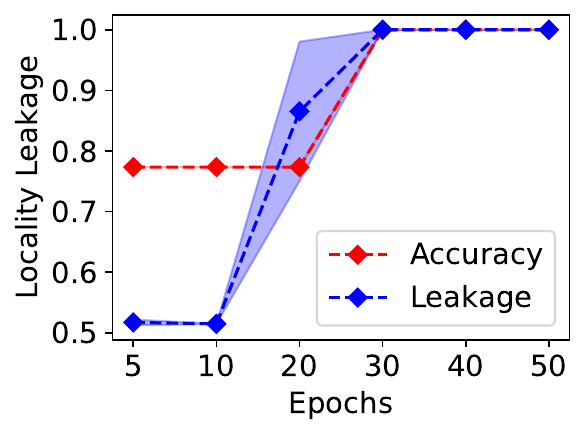}}
     \subfloat[\centering $\ell_2$ Regularization]{\includegraphics[width=0.3\textwidth]{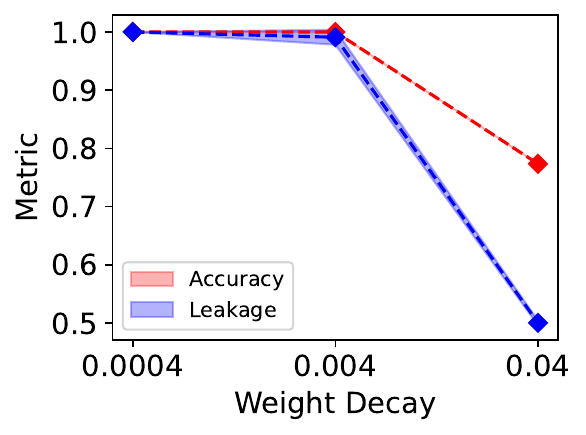}}
     \subfloat[\centering Adversarial Training]{\includegraphics[width=0.3\textwidth]{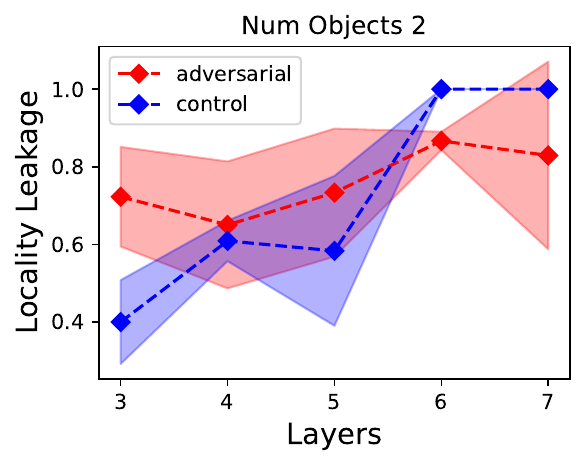}}
    \caption{We find that training time modifications such as (a) early stopping, (b) $\ell_2$ regularization, and (c) adversarial training cannot significantly reduce locality leakage without also impacting accuracy. Applying adversarial training leads to locality leakage values between $0.6$ and $0.9$ across all concept predictor depths, so all concept predictors exhibit some locality leakage.}
    \label{fig:training}
\end{figure*}

We analyze the impact of modifications to training procedures, including the number of epochs and loss function, upon the locality leakage metric. 
Our results in Section~\ref{sec:architecture} demonstrated that the selection of architecture impacted locality leakage, so it would be natural to understand whether training modifications also impact locality leakage. 
Prior work demonstrated that early stopping, which occurs when training stops after the loss converges or increases for a fixed number of epochs, can prevent overfitting~\citep{early_stopping}. 
We investigate whether such phenomena can impact locality, so we analyze the impact of training iterations upon locality. 
We vary the number of epochs from $5$ to $50$ and analyze the impact upon accuracy and leakage. 

In Figure~\ref{fig:training}, we find that increasing the number of epochs leads to increased accuracy while also increasing locality leakage. 
When we train models for 30 or more epochs, they achieve perfect concept accuracy but also achieve a locality leakage of at least $0.9$. 
Moreover, there exists no training setting where the model achieves perfect accuracy while having a small locality leakage. 

We additionally investigate whether modifications to the training loss can improve the locality properties of concept-based models. 
We consider two types of training modifications: $\ell_2$ regularization (weight decay) and adversarial training. 
We select $\ell_2$ regularization because prior work showed that it can potentially reduce spurious correlations~\citep{l2_spurrious}, while we select adversarial training because adversarially trained models are potentially more robust~\citep{adversarial_training_robustness}. 
For $\ell_2$ regularization, we vary the weight decay parameter in $\{0.0004,0.004,0.04\}$ and evaluate on the 2-object synthetic dataset with the 7-layer model. 
For adversarial training, we incorporate an adversarial loss and analyze the impact across different model depths in the 2-object synthetic dataset. 

In Figure~\ref{fig:training}b and Figure~\ref{fig:training}c, we find that neither $\ell_2$ regularisation nor adversarial training can reduce locality leakage beyond $0.6$ without impacting accuracy. 
While large amounts of $\ell_2$ regularization can reduce the locality leakage down to $0.5$, we find that this also leads to a reduction in accuracy. 
When concept predictors have few layers, adversarial training leads to more locality leakage, while even for 6 and 7-layer concept predictors, the locality leakage is at least $0.8$. 
This demonstrates that adversarial training fails to enforce that concept-based models respect localities.  

\subsection{CBM Variants}
\label{sec:variants}
\begin{table*}
\centering
\caption{Recent iterations of CBMs such as CEM and ProbCBM can reduce locality leakage for synthetic and real-world datasets due to their increased representational capacity. However, VLM-based approaches such as label-free CBMs fail to differentiate between relevant and irrelevant masking, showing that this model fails to address locality. We refrain from evaluating locality masking on the synthetic dataset because it would be redundant with the locality intervention metric.}
\resizebox{\textwidth}{!}{
\begin{tabular}{@{}lllccc@{}}
\toprule
Dataset        & CBM Variant   & Accuracy & \multicolumn{1}{l}{Leakage} & \multicolumn{1}{l}{Relevant Masking} & \multicolumn{1}{l}{Irrelevant Masking} \\ \midrule
Synthetic & CBM Baseline (7-Layer)  &                              100.0\% $\pm$ 0.0\%     &      1.000 $\pm$ 0.000       & 0.500 $\pm$ 0.000         & 0.052 $\pm$ 0.037                           \\
  & CEM &                          100.0\% $\pm$ 0.0\%          &     0.724 $\pm$ 0.028       & 0.380 $\pm$ 0.002                     &  0.355 $\pm$ 0.012              \\
            & ProbCBM &                           100.0\% $\pm$ 0.0\%      &         0.635 $\pm$ 0.014        & 0.384 $\pm$ 0.000     & 0.329 $\pm$ 0.002                              \\
CUB &      CBM Baseline (7-Layer)  &    67.1\% $\pm$ 0.4\%      & -              &       0.113  $\pm$ 0.002   &       0.107  $\pm$ 0.002         \\
     &   Label-Free CBM     &  74.6\%  $\pm$ 0.0\%      & -                &        0.053  $\pm$ 0.001    &       0.052  $\pm$ 0.001       \\ 
     &   CEM    &   68.1\% $\pm$ 1.0\%      & -              &       0.035  $\pm$ 0.001   &       0.034  $\pm$ 0.001         \\ 
     &   ProbCBM     &   68.2\% $\pm$ 0.8\%      & -              &       0.008  $\pm$ 0.001   &       0.007  $\pm$ 0.000         \\ 
     COCO &      CBM Baseline (7-Layer)  &    83.9\% $\pm$ 0.3\%      & -              &       0.144  $\pm$ 0.014   &       0.095  $\pm$ 0.004         \\
     &   CEM    &   83.6\% $\pm$ 1.1\%      & -              &       0.024  $\pm$ 0.009   &       0.020  $\pm$ 0.012         \\ 
     &   ProbCBM     &  83.3\% $\pm$ 0.4\%      & -              &       0.024  $\pm$ 0.003   &       0.008  $\pm$ 0.001         \\ \bottomrule
\end{tabular}
}

\label{tab:variants}
\end{table*}
Newer CBM variants have alleviated issues ranging from representational power~\citep{cem} to uncertainty~\citep{prob_cbms}, so we investigate the impact of these advances on our locality leakage metric. 
We evaluated the performance of three CBM variants according to accuracy and leakage (on the synthetic dataset) or masking (on the CUB and COCO datasets): 
\begin{enumerate}
    \item \textbf{Concept Embedding Models} address the low representational power of CBMs by allowing each concept to be represented through a vector~\citep{cem}. 
    \item \textbf{Probabilistic concept bottleneck models (ProbCBMs)} enable more accurate uncertainty estimates by modelling concept embeddings as normally distributed vectors~\citep{prob_cbms}.
    \item \textbf{Label Free CBMs} incorporate large language models so CBMs can be trained without needing ground-truth labels for concepts. It does so by generating a concept set for a given list of labels and using the embeddings of this concept set to train label predictors~\citep{label_free_cbms}. We evaluate label-free CBMs only on the CUB dataset because they rely on human-understandable descriptions for each label, which are not present for the synthetic or COCO datasets. 
\end{enumerate}

We compare metrics across these CBM variants in Table~\ref{tab:variants}. 
We find that CEMs and ProbCBMs reduce the leakage in the synthetic dataset, although a leakage of $0.6$ still indicates that concept predictions can be modified by 60\% of the original value. 
Moreover, ProbCBMs have a higher gap between relevant and irrelevant masking compared to the baseline CBM in the COCO dataset. 
Meanwhile, in the CUB dataset, label-free CBMs only tighten the gap between relevant and irrelevant masking by a factor of $6$. 
CEMs and ProbCBMs represent a potential panacea to address locality because they better model the embedding space for each concept, which assists concept predictors in differentiating between concepts. 
For example, ProbCBMs allow concept predictors to incorporate uncertainty when making concept predictions. 
With label-free CBMs, we find that the difference between relevant and irrelevant masking is almost nonexistent, implying that concept predictors leverage features beyond the local region. 
Our results with other variants of CBMs demonstrate that better-trained concept predictors, with better representational capacities, have the potential to reduce locality-related issues in some situations, though locality-related issues still persist with such models. 
\section{Theoretical Insights into Locality}
\label{sec:theory}
To understand whether our empirical takeaways from Section~\ref{sec:experiments} are an artefact of our experimental setup, we investigate the fundamental relationships between concept-based models and locality. 
We aim to formalize our study of locality intervention by analyzing how dataset construction impacts concept predictors. 

Datasets with many concepts and few data points make it difficult to learn good concept predictors. 
For example, if a dataset only contains images of cherries and no images of strawberries, then the ``Red Fruit'' concept would be perfectly correlated with the ``Circular Fruit'' concept. 
This makes it difficult or impossible to distinguish these two concepts, which means that the underlying concept predictors do not correspond to the true concept. 
We formalize this idea through the theorem below: 

\begin{theorem}
\label{thm:main}
Suppose out of $k$ total concepts, we can predict $m$, $\{\gamma_{1},\gamma_{2}, \cdots \gamma_{m}\} \subset \{1,2 \ldots k\}$, with perfect accuracy, $\mathbb{P}[\mathbf{\hat{c}}_{j} \neq \mathbf{c}_{j}] = 0 \forall j \in \{\gamma_{1},\gamma_{2},\ldots,\gamma_{m}\}$.  
Let $M_{j,q,i,r} = \mathbb{P}[\mathbf{c}_{j}=q | \mathbf{c}_{i}=r]$ be a correlation matrix, where $q,r \in \{0,1\}$. 
For any concept whose value we do not know ($j \notin \{\gamma_{1}, \gamma_{2} \ldots \gamma_{m}\}$), if for $s$ different triplets $(q, i, r)$, with $i \in \{\gamma_{1}, \gamma_{2}  \ldots \gamma_{m}\}$, we have (a)  $M_{j,q,i,r} \geq 1-\alpha$  and (b) $\mathbb{P}[\mathbf{c}_{i}=r] \geq \beta$,
% for some coefficients $\alpha$, $\beta$,
then there exists a concept predictor that achieves error $\epsilon = \mathbb{P}[\mathbf{\hat{c}}_{j} \neq \mathbf{c}_{j}] \leq \alpha + (1-\beta)^{s}$, using only information about the concepts $\{\gamma_{1}, \gamma_{2}  \ldots \gamma_{m}\}$, $g(\mathbf{x})_{j} = w(g(\mathbf{x})_{\gamma_{1}},g(\mathbf{x})_{\gamma_{2}}, \ldots, g(\mathbf{x})_{\gamma_{m}})$ for the concept predictor $g$ and for some function $w$.   
\end{theorem}

We present some intuition and interpretation for our Theorem. 
In the theorem,  $\gamma_{1},\gamma_{2},\ldots,\gamma_{m} \in [k]$ represent the $m$ concepts that are perfectly known and  $M_{j,q,i,r} \in [0,1]$ represents the probability that concept $j \in [k]$ has a value of $q \in [0,1]$, given that concept $i \in [k]$ has a value of $r \in [0,1]$. 
If concept $j$ is correlated sufficiently many other concepts $i$ then we can train a concept predictor with low error $\epsilon$. 
The error rate, $\epsilon$ is dictated by two factors: $\alpha$, which represents the correlation rate between concept $i$ and concept $j$ (that is, $\mathbb{P}[\mathbf{c}_{j}=q | \mathbf{c}_{i}=r] \geq 1-\alpha$), and $\beta$, which represents the probability that concept $i$ is $r$ (that is $\mathbb{P}[\mathbf{c}_{i}=r] \geq \beta$).
Our bound on $\epsilon$ is strict when $s$ is large, as in that scenario, $(1-\beta)^{s}$ is small, while the $\alpha$ term becomes tight. 
Our bound can additionally be modified even if concepts in $\{\gamma_{1},\gamma_{2},\ldots,\gamma_{m}\}$ are known up to error $\delta$.
In that scenario, our bound then becomes $\epsilon \leq \alpha + \delta + (1-\beta)^{s}$, so we only pay a linear cost in the error rate.

Theorem~\ref{thm:main} highlights the interaction between architecture, dataset, and locality. 
Intuitively, the theorem implies there exists concept predictors with low error that only leverage a subset of concepts for prediction. 
These concept predictors make predictions \textit{irrespective of the true presence of a concept}. 
For example, this implies there exists low-error concept predictors for ``Red Fruit'' and ``Circular Fruit'' that only leverage the ``Red Fruit'' concept, so the concept predictor for ``Circular Fruit'' is made irrelevant of whether the image contains a circular fruit. 
We prove such a theorem by explicitly constructing the low error concept predictor and analyzing this predictor. 
Here, $\alpha$ corresponds to the level of independence between concepts, so low $\alpha$ corresponds to situations where concepts are highly correlated, and $\beta$ is the frequency at which concept $i$ is $r$. 
In essence, \textit{datasets with highly correlated concepts can lead to concept-based models failing to respect localities}, as they fail to distinguish between concepts. 
Such results mirror the trends in Section~\ref{sec:experiments}, where increasingly diverse datasets can improve robustness to in-distribution perturbations. 
\section{Discussion}
\label{sec:discussion}
\paragraph{Avoiding Locality in Practice}
While vanilla CBM models generally fail to respect localities, we describe architectural and dataset choices that can help alleviate this issue.  
Both our empirical (Section~\ref{sec:experiments}) and theoretical (Section~\ref{sec:theory}) results demonstrate that larger and more diverse datasets can improve the adherence of concept-based models to localities found in the dataset. 
Concept-based models require datasets with different combinations of concepts so that concept predictors can distinguish between concepts. 
Moreover, our empirical results demonstrate that smaller models tend to exhibit less locality; this is partially because these models are less prone to overfitting the spurious correlations found in datasets.
Finally, models with improved representational capacity can better distinguish between concepts, as shown through the improved performance of ProbCBM and CEM models.  
Taken together, we recommend the following: (a) practitioners ought to be cautious about interpreting concept predictions as the arbiter of truth, (b) when possible, the datasets used by practitioners should be large and contain diverse concept combinations so concept predictors can distinguish between concepts, and (c) practitioners should use smaller models and models with greater representational capacity (e.g., CEMs or ProbCBMs) when possible. 

\paragraph{Limitations and Future Work}
Our work is an initial inquiry into the relationship between locality and concept-based models which naturally leads to future work. 
We look at two real-world datasets for concept-based models; we select these datasets because of their ubiquity and use across concept-based learning tasks. 
Future work could build on this by considering more datasets including across other modalities, such as those for tabular data~\citep{tabcbm}.
Additionally, future work could consider the application of our metrics to both the regression and generation settings. 
We can extend our metrics to the regression setting in a straightforward manner by modifying the label predictor to predict continuous values, while extension to the generative setting would require analogous metrics so modifications to concepts only modify relevant generated features.
Our work considers a set of architectural and training-related modifications to assess their impact on locality. 
Our study is not exhaustive; there are always new training techniques and modifications to architectures that could potentially alleviate issues related to locality. 
However, we aimed to construct a thorough study that studied a variety of modifications to better understand locality and its relationship to the assumptions made by concept-based models. 
\section{Conclusion}
\label{sec:conclusion}
Concept-based models rely on the assumption that concept predictions can give insight into a model's reasoning. 
We test whether this assumption holds by analyzing whether concept-based models respect locality. 
We construct three metrics to quantify locality and apply these across architectural and training configurations. 
We find that lack of respect for locality is a pervasive problem that is partially alleviated through diverse training data and smaller architectures. 
We complement our empirical results with a theoretical analysis that demonstrates how concept correlations can lead to concept-based models which fail to respect locality. 
Our study sheds light into the assumptions underlying concept-based models and gives insight into how concept-based models can be constructed with trustworthy explanations. 
\section*{Broader Impacts}
Our paper analyses the robustness of concept predictors, which underpin the explainability of CBMs. 
Understanding and improving explainability allows for safer machine learning model deployment, and our work aims to improve such deployments through better-understood interpretability algorithms. 
\section*{Acknowledgements}
The authors would like to thank Katie Collins, Andrei
Margeloiu, Tennison Liu, and Xiangjian Jiang for their suggestions and discussions on the paper. Additionally, the authors would like to thank Zohreh Shams for their comments
on the paper, and from the NeurIPS XAI Workshop reviewers for their insightful comments. During the time of this
work, NR was supported by a Churchill Scholarship, and
NR additionally acknowledges support from the NSF GRFP
Fellowship. MEZ acknowledges support from the Gates Cambridge Trust via a Gates Cambridge Scholarship. JH thanks
support from Cambridge Trust Scholarship.

\bibliography{main}
\bibliographystyle{tmlr}
\newpage
\appendix
\section{Architecture Selection Details}
\label{sec:architecture_details}
In Section~\ref{sec:experiments}, we construct concept predictors of various depths and widths to analyze the impact of concept predictor architecture upon leakage. 
Our models increase in width from 64, for the 3-depth models, to 512, for the 7-depth models. 
For models which hold the number of parameters constant, we construct model layers so that shallower models are also wide; for example, the 3-layer model has widths 64, 64, and 32, while the 7-layer model has widths 64, 64, 64, 32, 28, 24, and 20. 
We similarly construct models to hold the receptive field size constant; details can be found in our code. 

\section{Additional Ablation Results}
\subsection{Constrained Locality Leakage}
\label{sec:regularization}
\begin{figure*}
    \centering
  \subfloat[\centering Regularization Weightage]{\includegraphics[height=0.3\textwidth]{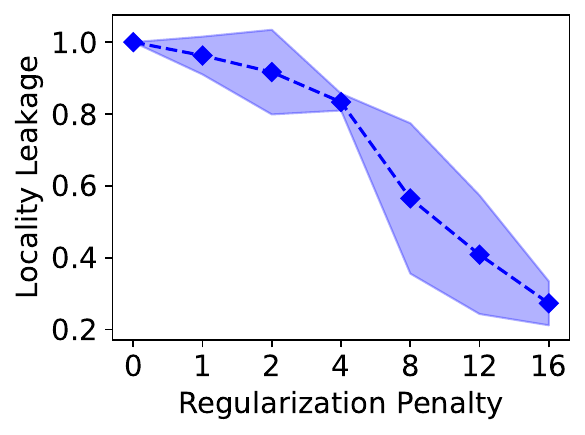}}    
   \subfloat[\centering Example Image]{\includegraphics[height=0.3\textwidth]{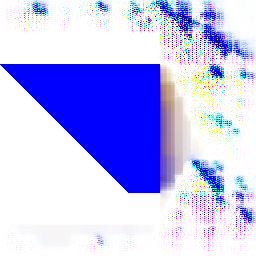}}   
    \caption{We introduce a regularization penalty to encourage perturbed images in locality leakage to remain similar to the original data point. While increasing the weight of this penalty decreases the locality leakage (a), we can find slightly perturbed examples that exhibit high locality leakage (b).}
    \label{fig:regularization}
\end{figure*}
Because our locality leakage metric can lead to out-of-distribution inputs, we consider a variant which finds a perturbed datapoint $\mathbf{x}'$ while minimizing the distance from the original datapoint $\mathbf{x}$. 
We incorporate this as a penalty in the loss function, weighted by a hyperparameter $\lambda$. 
Here, $\lambda$ balances between significantly perturbed data points, and those with less perturbations but still achieving a high locality leakage (Figure~\ref{fig:regularization}a). 
As an example Figure~\ref{fig:regularization}b is a data point with less perturbation but with locality leakage near 1. 
More generally, we find that even when locality leakage outputs are highly constrained (large $\lambda$), there still exists perturbed data points that achieve high locality leakage.

\subsection{MLP Concept Predictors}
\label{sec:mlp}
\begin{table}[]
\centering 
\caption{With MLPs as concept predictors, wider MLPs lead to more locality leakage. However, deeper MLPs do not lead to more locality leakage, contradicting the pattern found in Section~\ref{sec:experiments}.
This trend potentially occurs because deeper models lead to information bottlenecks which prevent information from travelling between layers~\citep{tishby2000}}
\begin{tabular}{@{}lll@{}}
\centering 
% \toprule
Depth & Width & Locality Leakage \\ \midrule
1     & 5     & 0.63 $\pm$ 0.32       \\
      & 10    & 0.80 $\pm$ 0.28        \\
      & 15    & \textbf{0.97 $\pm$ 0.05}        \\ 
2     & 5     & 0.59 $\pm$ 0.30        \\
      & 10    & 0.47 $\pm$ 0.08        \\
      & 15    & 0.57 $\pm$ 0.32        \\
3     & 5     & 0.49 $\pm$ 0.40        \\
      & 10    & 0.74 $\pm$ 0.19        \\
      & 15    & 0.74 $\pm$ 0.09        \\ \bottomrule 
\end{tabular}
\label{tab:mlp}
\end{table}

To get more insight into the impact of concept predictor width and depth on locality leakage, we use multi-layer perceptrons (MLPs) as concept predictors. 
We vary the depth and width of MLPs, varying the depth from 1 to 3 and width from 5 to 15. 
In Table~\ref{tab:mlp}, we see that, when holding the depth constant at 1 or 3, increasing width leads to higher locality leakage. 
However, interestingly, holding the width constant and increasing depth generally leads to less locality leakage. 
Such a trend potentially occurs because of information bottlenecking~\citep{tishby2000}; increasingly deep models that are not sufficiently wide might make it difficult to train CBMs because information gets ``stuck'' between layers. 
The experiments with MLPs highlight that locality leakage can be an issue with other architectures, though the depth-width patterns from CNNs do not necessarily carry over to MLPs. 

\subsection{Noisy Synthetic Datasets}
\label{sec:noise}
\begin{figure*}
    \centering
    \includegraphics[width=0.5\textwidth]{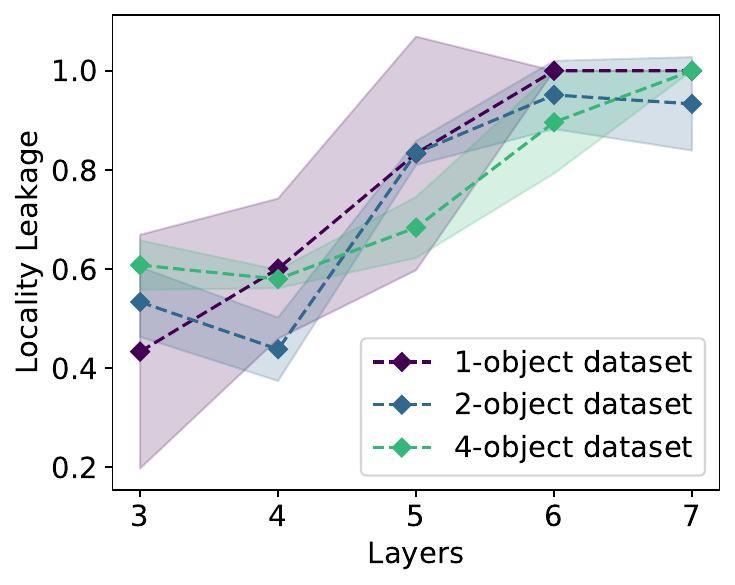}
    \caption{We reconstruct the experiment from Section~\ref{sec:experiments} and adding noise to the synthetic datasets. We see that adding noise does not change the main result that deeper models lead to more locality leakage, which implies that our results are robust to noise in the input.}
    \label{fig:noise}
\end{figure*}

We construct a variant of the synthetic dataset with additional noise to understand whether locality leakage persists when slightly perturbing data features. 
We construct such a dataset by adding Gaussian noise to the 1, 2, and 4-object datasets at train and test time, computing locality leakage across different layered concept predictors. 

In Figure~\ref{fig:noise}, we see that increasing the number of layers similarly leads to increased locality leakage. 
The trend matches the patterns seen in Section~\ref{sec:experiments}, where deeper concept predictors led to higher locality leakages. 
Similar results with or without noise imply that our results are not sensitive to small dataset perturbations. 

\subsection{Selection of Masking Colour}
\label{sec:colour}
\begin{figure*}
    \centering
 \subfloat[\centering Zero Masks]{ \includegraphics[width=0.4\textwidth]{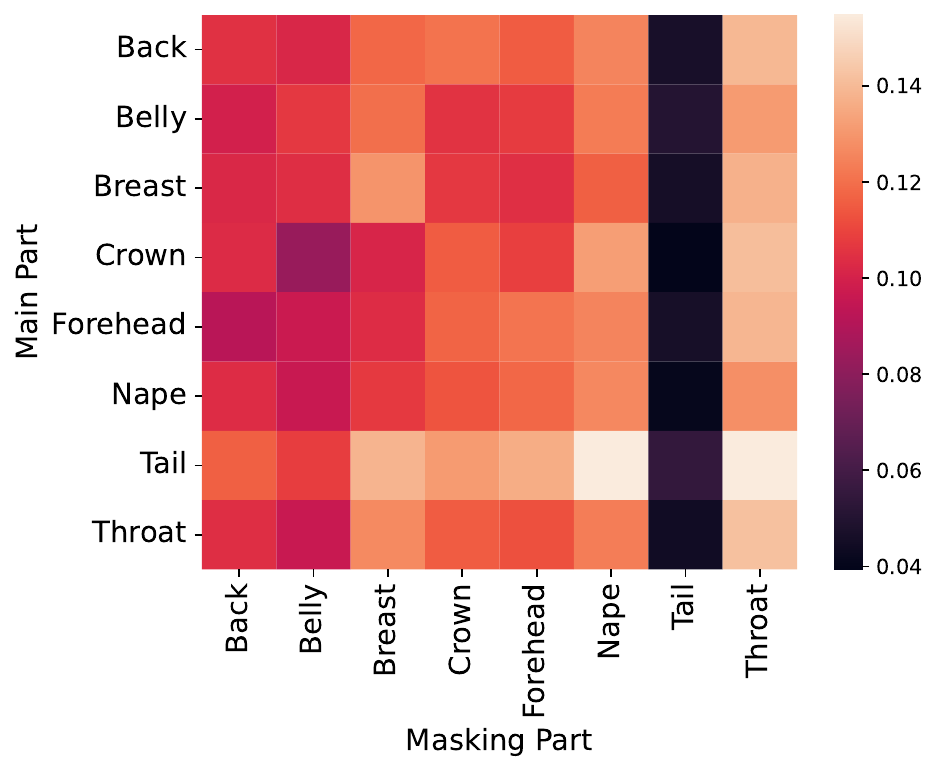}}    
 \subfloat[\centering Mean Masks]{ \includegraphics[width=0.4\textwidth]{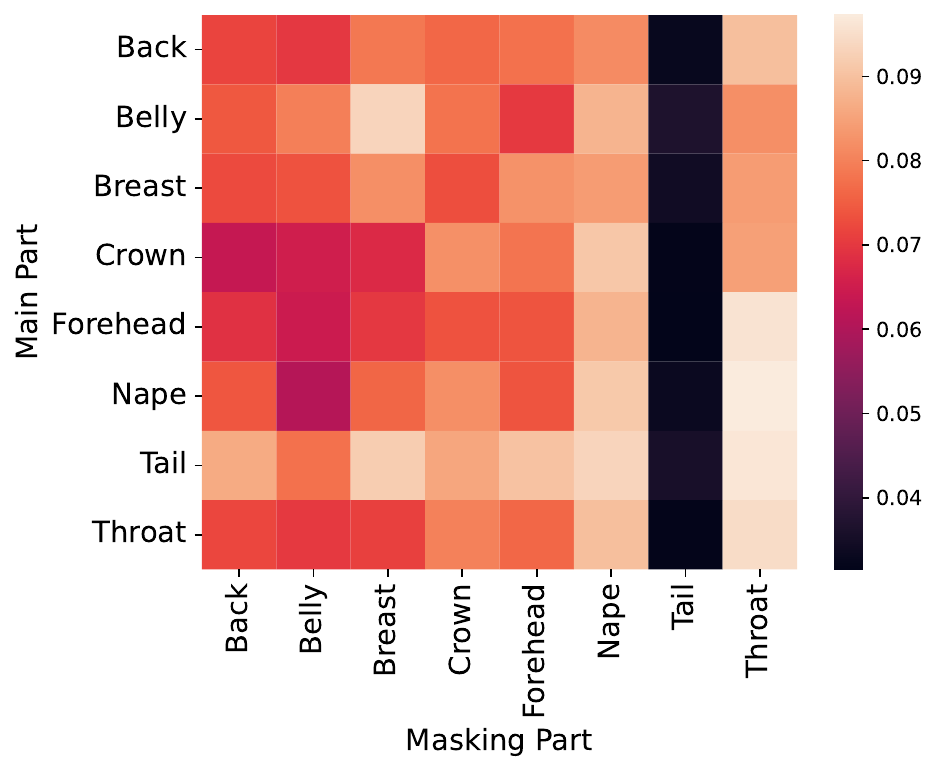}} 
    \caption{We compare the impact of zero masks (a) and mean masks (b) in the CUB dataset. We compute the change in prediction for one part (row) when masking another part (column). For both zero and mean masks, masking the belly has the same effect for predicting the belly as it does for predicting unrelated parts such as the back and forehead. Moreover, the absence of a strong diagonal pattern (where the same part is masked and analyzed) implies that both zero and mean masks have similar effects for relevant and irrelevant masking. }
    \label{fig:colors}
\end{figure*}

We assess whether our locality masking experiments in Section~\ref{sec:experiments} are a type of mask used. 
We originally used a zero mask, where features are modified to zero, and we assess whether similar results apply when using mean masks instead. 
The mean masks replace the data point features with the average value of the feature computed throughout the training set. 
By analyzing mean masks, we are able to see whether our choice of the masking colour has a big impact on our results. 

We compare the results using zero and mean masks for the CUB dataset in Figure~\ref{fig:colors}. 
We find that both zero and mean masks exhibit similar trends, where masking parts such as the back and belly can lead to performance decreases for unrelated parts. 
For example, masking the belly of a bird leads to a locality masking score of 0.10 for the back. 
Moreover, we see no difference between relevant and irrelevant masking, as shown by the absence of any pattern on the diagonal. 
Therefore, our locality masking results from Section~\ref{sec:experiments} are not sensitive to the type of masks used, whether that is zero or mean masks. 

\subsection{Residual Connections}
\label{sec:residual}
Introducing residual connections in CBMs can alleviate concept leakage, so we explore whether similar choices can improve locality~\citep{havasi2022addressing}. 
Residual connections introduce a connection between the input $\mathbf{x}$ and the output $y$, avoiding the bottleneck layer $\mathbf{c}$. 
We introduce a similar connection and assess the impact on locality leakage in CBMs in a synthetic dataset. 

\begin{figure*}
    \centering
    \includegraphics[width=0.4\textwidth]{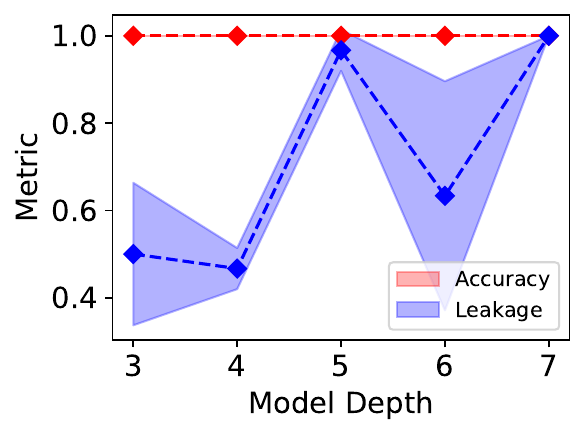}
    \caption{We incorporate residual connections following prior work~\cite{havasi2022addressing} to understand whether techniques used to address concept leakage can also improve locality. Across model sizes, we find that models trained with residual connections still exhibit high levels of locality leakage, with this best seen with depth-7 models.}
    \label{fig:residual}
\end{figure*}
In Figure~\ref{fig:residual}, we find that residual connections fail to impact locality leakage. 
Concept predictors with depths 5 and 7 still exhibit a locality leakage near 1 when using residual connections. 
The result demonstrates the difference between locality and concept leakage; solutions to concept leakage such as residual connections fail to improve the locality properties of CBMs. 

\subsection{Impact of Mask Size}
\label{sec:masking_exploration}
\begin{figure}[t!]
    \centering
    \includegraphics[width=0.2\textwidth]{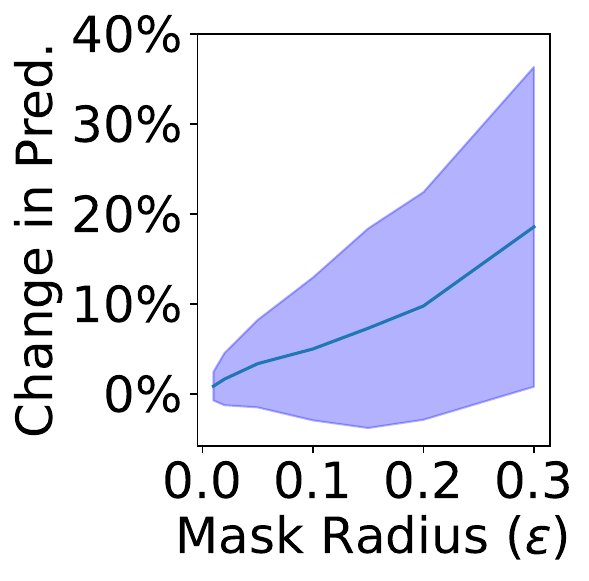}
    \caption{We vary the mask radius for the \textit{tail} concept in the CUB dataset and find that, as expected, increasing mask radius leads to increasing change in concept prediction. However, even large masks that lead to significantly distorted images, hinting that predictors may use correlations to predict \textit{tail colour}.}
    \label{fig:radius}
\end{figure}
To better understand locality masking, we analyse the impact of mask size $\epsilon$ on concept predictions. 
We evaluate the impact of mask predictions in CUB for the concepts of \textit{tail colour} and \textit{tail shape}, selecting these since tails are visually distinct from other parts in this task. 
We reformulate masking in CUB so all features within $\epsilon$ of the part's centre are masked, even if the features are closer to another concept, and vary $\epsilon$ from $1\%$ to $30\%$ of~$l$.
% in the following set $\{1,2,5,10,15,20,30\}$. 
Intuitively, we expect that increasing $\epsilon$ should increase changes in concept prediction, yet it is unclear how large $\epsilon$ needs to be so that models are significantly less confident. 
We compute the change in concept prediction across concept values that are at least $75\%$ confident, selecting this as a threshold indicative of a high-confidence prediction, with the expectation that masking should decrease model confidence so models are uncertain (around 50\%).

In Figure~\ref{fig:radius}, we find that, as expected, increasing $\epsilon$ increases changed predictions, yet even large values of $\epsilon$ fail to change high-confidence concept predictions significantly. 
This implies that models are exploiting correlations between, for example, ``\textit{chest colour}'' and ``\textit{tail colour}'' to predict the ``\textit{white tail colour}'' concept. 
Such a phenomenon might occur because concepts are occluded for some samples in CUB. 
For example, predicting ``\textit{tail colour}'' when the tail is occluded requires leveraging ``\textit{chest colour}''.
% Such behaviour causes CBMs to make assumptions which are potentially not supported by the data. 

\subsection{Comparison with Other Metrics}
\label{sec:metric_comparison}
We present a discussion of the relationship between our work and a set of related but distinct metrics proposed by~\citet{concept_metrics}. 
In that work, the authors propose two metrics: the oracle impurity score (OIS) and niche impurity score (NIS). 
OIS captures the ability of concept predictions for concept $j$ to predict an unrelated concept $j'$. 
Higher values for OIS indicate that more information is leaked between concepts, or that concept predictors are encoding impurities. 
NIS extends this to consider \textit{sets} of concepts that encode impurities by investigating the predictive power of such sets. 
NIS values near 1/2 indicate that no impurities are encoded, while larger NIS values imply that higher levels of impurity are encoded. 
For further details and a rigorous definition, we refer readers to prior work~\citep{concept_metrics}.

We demonstrate that the OIS and NIS metrics fail to distinguish between models with low and high levels of locality leakage. 
We do so by comparing the OIS and NIS metrics across models of varying depth trained on the 2-object dataset. 
Across all model depths, we find that the OIS values range between $0.05$ and $0.10$, while the NIS values range between $0.41$ and $0.47$. 
Both metrics indicate that models do not leak impurities into concept predictors. 
Despite this, we find a large variation in locality leakage between models with 3 layers and those with 7. 
The reason for this discrepancy is because the OIS and NIS metrics are computed by comparing concept predictions to groundtruth concepts, which ignores whether concept predictors leverage the appropriate set of features in a concept's local region. 
In contrast, our metrics evaluate concept predictors across perturbed data points to understand whether these concept predictors respect locality. 
Our locality metrics are complementary to the OIS and NIS metrics, and they each measure different desirable properties of concept predictors. 

\section{Proofs}
\label{sec:proofs}
This section presents our proof for Theorem~\ref{thm:main} discussed in Section~\ref{sec:theory}. For us to present our proof, we begin by showing two lemmas that will become useful later on. The first lemma, shown below, introduces a simple identity on sets of real numbers in $[0, 1]$:

\begin{lemma}
Consider a set of $n$ real numbers $\{p_{1} \cdots p_{n}\}$, where $\forall i, \; 0\leq p_{i} \leq 1$. Then, the following identity must hold:
\[
    \sum_{i=1}^{n} p_{i} \prod_{j=1}^{i-1} (1-p_{j}) = 1 - \prod_{i=1}^{n} (1-p_{i})
\]
\label{lemma:prod}
\end{lemma}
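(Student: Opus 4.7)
The plan is to prove the identity by recognizing it as a telescoping sum, which gives a clean one-line argument once the right rewriting of each term is spotted. The key observation is that $p_i = 1 - (1-p_i)$, so each summand can be expressed as a difference of two consecutive partial products. Concretely, I would write
\[
p_i \prod_{j=1}^{i-1}(1-p_j) \;=\; \bigl[1-(1-p_i)\bigr]\prod_{j=1}^{i-1}(1-p_j) \;=\; \prod_{j=1}^{i-1}(1-p_j) - \prod_{j=1}^{i}(1-p_j),
\]
using the convention that the empty product (the case $i=1$) equals $1$. Summing this from $i=1$ to $n$ telescopes: all intermediate terms cancel, leaving $\prod_{j=1}^{0}(1-p_j) - \prod_{j=1}^{n}(1-p_j) = 1 - \prod_{j=1}^{n}(1-p_j)$, which is exactly the right-hand side.

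As an alternative that may read more naturally in the appendix, I would present the same result by induction on $n$. The base case $n=1$ is immediate: the left side reduces to $p_1$ (the empty product being $1$) and the right side to $1-(1-p_1)=p_1$. For the inductive step, assume the identity for $n-1$, split off the last term of the $n$-sum, apply the inductive hypothesis to the first $n-1$ terms, and then factor $\prod_{j=1}^{n-1}(1-p_j)$ out of the remainder to obtain $1 - (1-p_n)\prod_{j=1}^{n-1}(1-p_j) = 1 - \prod_{j=1}^{n}(1-p_j)$.

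There is no real obstacle here: the identity is purely algebraic and holds for arbitrary reals $p_i$ (the hypothesis $p_i \in [0,1]$ is not actually used in the derivation, though it matches the probabilistic interpretation the lemma will later serve). The only minor care needed is the empty-product convention at $i=1$, which I would state explicitly at the outset so that the boundary term in the telescoping (or the base case of the induction) is unambiguous. Either proof is short enough that I would likely include the telescoping version in the main text and omit the induction.
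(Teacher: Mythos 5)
Your telescoping argument is correct and is a genuinely different (and arguably cleaner) route than the paper's. The paper proves the equivalent rearranged identity $\prod_{i=1}^{n}(1-p_i) = 1 - \sum_{i=1}^{n} p_i \prod_{j=1}^{i-1}(1-p_j)$ by induction on $n$: the base case is checked at $n=1$, and the inductive step peels off the factor $(1-p_{k+1})$, distributes, applies the inductive hypothesis, and recombines. Your secondary induction sketch is essentially that same argument (stated on the other side of the identity), so it matches the paper. Your primary telescoping derivation is different in character: by writing $p_i = 1-(1-p_i)$ each summand collapses to $\prod_{j<i}(1-p_j) - \prod_{j\le i}(1-p_j)$, and the sum telescopes in one step. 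What the telescoping buys is a non-inductive, one-line proof that makes the cancellation structure visible; what the paper's induction buys is perhaps a more routine write-up that avoids explaining the empty-product convention (though it silently relies on it in the base case anyway). Your observation that the constraint $p_i\in[0,1]$ is never used in this lemma is also correct — it is only needed in the paper's Lemma A.2, where non-negativity of $\prod_i (1-p_i)$ is invoked.
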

\begin{proof}
    To simplify this proof, we will show the equivalent but more accessible statement that:
    \[
        \prod_{i=1}^{n} (1-p_{i}) = 1-\sum_{i=1}^{n} p_{i} \prod_{j=1}^{i-1} (1-p_{j})
    \]
    We proceed to do this by induction on the number of real numbers $n$. Our base case $n=1$ trivially follows by noting that both sides of the equation above resolve to $1-p_{1}$ when $n = 1$. 
    
    Now, assume that the claim above holds for all positive integers $n$ up to some value $k \in \mathbb{Z}^+$. Consider what happens when $n = k + 1$: 

    \begin{align*}
        \prod_{i=1}^{k + 1}(1- &p_{i}) = (1-p_{k+1})\prod_{i=1}^{k}(1-p_{i})\\
        &= \prod_{i=1}^{k}(1-p_{i}) - p_{k + 1}\prod_{i=1}^{k}(1-p_{i})\\
        &= \Big( 1 - \sum_{i=1}^{k} p_{i} \prod_{j=1}^{i-1} (1-p_{j}) \Big) - p_{k + 1}\prod_{i=1}^{k}(1-p_{i})\\
        % &= (1-p_{k + 1}) (1 - \sum_{i=1}^{k} p_{i} \prod_{j=1}^{i-1}(1-p_j))\\
        % &= 1 - p_{k + 1} - (1-p_{k + 1})\sum_{i=1}^{k} p_{i} \prod_{j=1}^{i-1} (1-p_{j})\\
        % &= 1 - \Big(\sum_{i=1}^{k} p_{i} \prod_{j=1}^{i-1} (1-p_{j}) + p_{k + 1} - p_{k+1} \sum_{i=1}^{k} p_{i} \prod_{j=1}^{i-1}(1-p_{j})\Big)\\
        &= 1 - \Bigg(\sum_{i=1}^{k} p_{i} \prod_{j=1}^{i-1} (1-p_{j}) + p_{k + 1} \prod_{j=1}^{k} (1-p_{j})\Bigg)\\
        &= 1-\sum_{i=1}^{k + 1} p_{i} \prod_{j=1}^{i} (1-p_{j})
    \end{align*}
    where the inductive hypothesis was applied in our second step. This shows that our statement holds for $n = k +1$ and therefore holds for all positive integers $n$.
\end{proof}

The second lemma directly builds on the identity introduced in Lemma~\ref{lemma:prod} to produce a simple upper bound for a sum of products involving the same list of reals in $[0, 1]$:

\begin{lemma}
    Consider a set of $n$ real numbers $\{p_{1} \cdots p_{n}\}$, where $\forall i, \; 0\leq p_{i} \leq 1$. 
    Then
    \[
        \sum_{i=1}^{n} p_{i} \prod_{j=1}^{i-1} (1-p_{j}) \leq 1
    \]
    \label{lemma:sum}
\end{lemma}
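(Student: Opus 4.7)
The plan is to obtain this bound as an immediate corollary of Lemma~\ref{lemma:prod}, rather than redoing induction. Concretely, Lemma~\ref{lemma:prod} already supplies the closed-form identity
\[
\sum_{i=1}^{n} p_{i} \prod_{j=1}^{i-1} (1-p_{j}) \;=\; 1 - \prod_{i=1}^{n} (1-p_{i}),
\]
so the task reduces to showing that the right-hand side is at most $1$, i.e.\ that $\prod_{i=1}^{n}(1-p_i) \geq 0$.

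The key step is therefore just a non-negativity argument on the product. Since each $p_i \in [0,1]$, each factor $(1-p_i)$ lies in $[0,1]$ and in particular is non-negative; a product of non-negative reals is non-negative, so $\prod_{i=1}^{n}(1-p_i) \geq 0$. Subtracting this quantity from $1$ gives a value at most $1$, which by the identity above is exactly the sum we wanted to bound. (As a small aside, one also gets the matching lower bound $\geq 0$ from the same argument, since each factor is at most $1$ and so the product is at most $1$; this is not needed for the statement but is worth noting since the quantity in question is a probability-like expression in the broader application.)

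I would therefore write the proof in two short lines: invoke Lemma~\ref{lemma:prod} to rewrite the sum, then observe $\prod_{i=1}^{n}(1-p_{i}) \in [0,1]$. There is no real obstacle here; the only thing to be careful about is guarding the degenerate case $n=0$, which one can either exclude by convention (empty sum equals $0 \leq 1$) or handle by noting the empty product equals $1$, making the identity $0 = 1 - 1$ hold trivially. Given the simplicity, I would keep the proof to at most a few sentences rather than restating the inductive machinery already used in Lemma~\ref{lemma:prod}.
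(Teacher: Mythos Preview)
Your proposal is correct and essentially identical to the paper's proof: both invoke Lemma~\ref{lemma:prod} to rewrite the sum as $1 - \prod_{i=1}^{n}(1-p_i)$ and then observe that the product is non-negative because each factor $(1-p_i)\in[0,1]$.
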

\begin{proof}
    We begin by noticing that, because all $\{p_1, \cdots, p_n\}$ are reals such that $\forall i, \; 0 \leq p_i \leq 1$, each of the terms in the product $\prod_{i=1}^{n} (1-p_{i})$ must be non-negative. This trivially implies that the product $\prod_{i=1}^{n} (1-p_{i})$ itself must be non-negative. Therefore, we can conclude that:
    \begin{align*}
         0 \leq \prod_{i=1}^{n} (1-p_{i}) \\
         % \Leftrightarrow - \prod_{i=1}^{n} (1-p_{i}) \leq 0 \\
         \Leftrightarrow 1 - \prod_{i=1}^{n} (1-p_{i}) \leq 1 \\
         \Leftrightarrow \sum_{i=1}^{n} p_{i} \prod_{j=1}^{i-1} (1-p_{j}) \leq 1 && \text{(by Lemma~\ref{lemma:prod})} \\
    \end{align*}
    which is exactly what we wanted to show.
\end{proof}

Equipped with these two lemmas, we now present a proof for the key theoretical result introduced in Section~\ref{sec:theory}:

\newtheorem*{theorem8}{Theorem 6.1}
\begin{theorem8}
Suppose out of $k$ total concepts, we know $m$, $\{\gamma_{1},\gamma_{2}, \cdots \gamma_{m}\} \subset \{1,2 \cdots k\}$, with perfect accuracy.  
Let $M_{j,q,i,r} = \mathbb{P}[\mathbf{c}_{j}=q | \mathbf{c}_{i}=r]$ be a correlation matrix, where $q,r \in \{0,1\}$. 
For any concept $j \notin \{\gamma_{1}, \gamma_{2} \cdots \gamma_{m}\}$ whose value we do not know, if for $s$ different triplets $(q, i, r)$, with $i \in \{\gamma_{1}, \gamma_{2}  \cdots \gamma_{m}\}$, we have i)  $M_{j,q,i,r} \geq 1-\alpha$  and ii) $\mathbb{P}[\mathbf{c}_{i}=r] \geq \beta$,
% for some coefficients $\alpha$, $\beta$,
then there exists a concept predictor that achieves error $\epsilon = \mathbb{P}[\mathbf{\hat{c}}_{j} \neq \mathbf{c}_{j}] \leq \alpha + (1-\beta)^{s}$ using only information about the concepts $\{\gamma_{1}, \gamma_{2}  \cdots \gamma_{m}\}$.      
\end{theorem8}

\begin{proof}
We propose an algorithm which achieves error rate $\epsilon = \mathbb{P}[\mathbf{c}_{j}=r] \leq \alpha + (1-\beta)^{s}$.
To predict $\mathbf{c}_{j}$, we leverage the correlation between $\mathbf{c}_{j}$ and an $s$ different values of $M_{j,q,i,r}$, where $i \in \{\gamma_{1}, \gamma_{2}  \cdots \gamma_{m}\}$. 
By leveraging this, we're able to accurately predict the presence of $\mathbf{c}_{j}$
% The algorithm considers the concept correlation with the $m$ known concepts and considers a subset $s$ of these $m$ concepts which have a high probability of occurring and are well correlated. 

For this, let the $s$ triplets  satisfying (i) and (ii) be $\mathcal{B} := \{B_l \: | \; B_l = (q_l, i_l, r_l)\}_{l = 1}^{s}$. 
Furthermore, for any triplet $B_l = (q_l, i_l, r_l)$ in this set, let $p_{l}$ be $\mathbb{P}[\mathbf{c}_{i_l}=r_l]$.

Here, we propose an algorithm that takes as an input the set of triplets $\mathcal{B}$, along with their associated probabilities $\{p_l\}_{l=1}^s$, and returns a label $\hat{\mathbf{c}}_j$ for an unknown concept $j$. 
Our algorithm proceeds as follows: We will iterate with an index variable $u$ starting with $u=1$ and finishing with $u = s$. At each iteration, we select the triplet $B_{u} = (q_u, i_u, r_u)$. 
As stated above, we know $i_{u} \in \{\gamma_{1}, \gamma_{2} \cdots \gamma_{m}\}$, and therefore we have perfect knowledge of $\mathbf{c}_{i_{u}}$
Therefore, we can check if $\mathbf{c}_{i_u} = r_u$. If that is the case, then our algorithm predicts $\hat{\mathbf{c}}_j = q_u$. Otherwise, we continue to the next iteration of the loop. 
% This occurs with probability $p_{u} \prod_{j=1}^{i-1} (1 - p_{j})$, and achieves an error of $1-B_{u}$. 
If we reach the end of the loop, then we simply guess that $\hat{\mathbf{c}}_{j} = 0$. 

We note that our algorithm will terminate at step $u$ with probability $p_{u} \prod_{l=1}^{u-1} (1 - p_{l})$.
This is because to terminate at step $u$, we must encounter the event $\mathbf{c}_{i_u} = r_u$, which occurs with probability $\mathbb{P}[\mathbf{c}_{i_u} = r_u]  = p_{u}$.
Additionally, we must not terminate in any previous steps $l < u$, each event happening with probability $\mathbb{P}[\mathbf{c}_{i_l} \neq r_l] = 1 - \mathbb{P}[\mathbf{c}_{i_l} = r_l] = 1 - p_l$.
If we terminate at step $u$, then we know that $\mathbf{c}_{i_u} = r_u$ and we predict $\hat{\mathbf{c}}_{j} = q_{u}$. 
This prediction fails with probability $\mathbb{P}[\mathbf{c}_{j} \neq q_u | \mathbf{c}_{i_u} = r_u] = 1 - \mathbb{P}[\mathbf{c}_{j} = q_u | \mathbf{c}_{i_u} = r_u] = 1 - M_{j, q_u, i_u, r_u}$.  
For the sake of simplifying notation, and because $j$ remains constant and implicitly known throughout the algorithm, we use $M_{B_u}$ to represent $M_{j, q_u, i_u, r_u}$ for triplet $B_u = (q_u, i_u, r_u)$.

The error rate $\epsilon = \mathbb{P}[\mathbf{\hat{c}}_{j} \neq \mathbf{c}_{j}]$ of this algorithm is then given by
\begin{align*}
\epsilon = \sum_{u=1}^{s} (1-M_{B_{u}}) p_{u} \prod_{l=1}^{u-1} (1-p_{l}) + \mathbb{P}[\mathbf{c}_{j} \neq 0] \prod_{u=1}^{s} (1-p_{u}) \\ \leq \sum_{u=1}^{s} (1-M_{B_{u}}) p_{u} \prod_{l=1}^{u-1} (1-p_{l}) + \prod_{u=1}^{s} (1-p_{u})
\end{align*}
as $\mathbb{P}[\mathbf{c}_{j} \neq 0] \leq 1$
Then, we note that $p_{u} \geq \beta$, implying that $\prod_{u=1}^{s} (1-p_{u}) \leq (1-\beta)^s$. 
Finally, we note that $1 - M_{B_{u}} \leq \alpha$, so $\sum_{u=1}^{s} M_{B_{u}} p_{u} \prod_{l=1}^{u-1} (1-p_{l}) \leq \alpha \sum_{u=1}^{s} p_{u} \prod_{l=1}^{u-1} (1-p_{l}) \leq \alpha$ by Lemma~\ref{lemma:sum}.
Combining all of the above gives us an error $\epsilon \leq \alpha + (1-\beta)^{s}$.

\end{proof}

\end{document}